\newcommand\x{\mathbf{x}}
\newcommand\z{\mathbf{z}}
\newcommand\y{\mathbf{y}}
\renewcommand\v{\mathbf{v}}
\renewcommand\c{\mathbf{c}}
\newtheorem{proposition}{Proposition}
\newtheorem{definition}{Definition}
\title{Challenges in Disentangling Independent Factors of Variation}
\author{\thanks{The authors contributed equally.} Attila Szab\'{o}, \footnotemark[1] Qiyang Hu, Tiziano Portenier, \& Paolo Favaro\\
Institute of Computer Science\\
University of Bern\\
Switzerland\\
\texttt{\{szabo, hu, portenier, zwicker, paolo.favaro\}@inf.unibe.ch}
\And
Matthias Zwicker \\
Institute for Advanced Computer Studies \\
University of Maryland \\
USA\\
\texttt{zwicker@cs.umd.edu}
}
\begin{document}

\maketitle

\begin{abstract}

We study the problem of building models that disentangle independent factors of variation.
Such models could be used to encode features that can efficiently be used for classification and to transfer attributes between different images in image synthesis.
As data we use a weakly labeled training set. Our weak labels indicate what single factor has changed between two data samples, although the relative value of the change is unknown.
This labeling is of particular interest as it may be readily available without annotation costs. To make use of weak labels we introduce an autoencoder model and train it through constraints on image pairs and triplets.
We formally prove that without additional knowledge there is no guarantee that two images with the same factor of variation will be mapped to the same feature. We call this issue the reference ambiguity.
Moreover, we show the role of the feature dimensionality and adversarial training.
We demonstrate experimentally that the proposed model can successfully transfer attributes on several datasets, but show also cases when the reference ambiguity occurs.

\end{abstract}

\section{Introduction}

One way to simplify the problem of classifying or regressing attributes of interest from data is to build an intermediate representation, a feature, where the information about the attributes is better separated than in the input data. Better separation means that some entries of the feature vary only with respect to one and only one attribute. In this way, classifiers and regressors would not need to build invariance to many nuisance attributes. Instead, they could devote more capacity to discriminating the attributes of interest, and possibly achieve better performance. We call this task \emph{disentangling factors of variation}, and we identify attributes with the factors.
In addition to facilitating classification and regression, this task is beneficial to image synthesis. One could build a model to render images, where each input varies only one attribute of the output, and to transfer attributes between images. %(see Fig.~\ref{fig:transfer}).

%\begin{figure}[b]
%\begin{center}
%	\includegraphics[width=0.5\linewidth,trim={4.5cm 6cm 7.5cm 5cm},clip]{}
%	\caption{Transferring factors of variation. The encoder builds a feature representation where the top subvector is associated only to the viewpoint of the input image and the bottom subvector is associated only to the car type. The top two rows share the same car type but have a different (unknown) viewpoint. By mixing the subvectors of different images one can transfer the viewpoint from one image to another through the decoder.}
%	\label{fig:transfer}
%\end{center}
%\end{figure}

When labeling is possible and available, supervised learning can be used to solve this task.  In general, however, some attributes may not be easily quantifiable (\emph{e.g.}, style). Therefore, we consider using \emph{weak labeling}, where we only know what attribute has changed between two images, although we do not know by how much. This type of labeling may be readily available in many cases without manual annotation. For example, image pairs from a stereo system are automatically labeled with a viewpoint change, albeit unknown. A practical model that can learn from these labels is an encoder-decoder pair subject to a reconstruction constraint. % (as shown in Fig.~\ref{fig:transfer}).
In this model the weak labels can be used to define similarities between subsets of the feature obtained from two input images.

In this paper we study the ambiguities in mapping images to factors of variation and the effect of the feature dimensionality on the learned representation. Moreover, we introduce a novel architecture and training procedure to disentangle factors of variation.
We find that the simple reconstruction constraint may fail at disentangling the factors when the dimensionality of the features is too large. We thus introduce an adversarial training to address this problem.
More importantly, in general there is no guarantee that a model will learn to disentangle all factors. We call this challenge the \emph{reference ambiguity} and formally show the conditions under which it appears. In practice, however, we observe experimentally that often the reference ambiguity does not occur on several synthetic datasets.

\section{Related work}

%\textbf{Self-supervised learning.} Recently, a number of methods learn representations by using a supervisory signal provided by an auxiliary task. When the auxiliary task exploits the structure of the data, these methods avoid expensive manual annotation. For example, exemplar CNNs \cite{dosovitskiy2014discriminative} are trained to classify images, where each category was created by randomly transforming the given images. In \cite{doersch2015unsupervised} neighboring patches are classified according to their spatial configuration. Puzzle solver networks \cite{noroozi2016puzzle} expand this idea to unscramble randomly permuted patches. In \cite{radford2015dcGAN} generative adversarial networks are trained, and their discriminator is used for classification. In \cite{pathak2016context} images are split into two complementary regions, the context and the target. An autoencoder is trained to reconstruct the target from the context. These methods can be used for detection, classification and segmentation by transferring the learned features.

\noindent\textbf{Autoencoders.} Autoencoders in \cite{bourlard1988auto}, \cite{hinton2006reducing}, \cite{bengio2013representation} learn to reconstruct the input data as $\x = \text{Dec}( \text{Enc}(\x) )$, where $\text{Enc}(\x)$ is the internal image representation (the encoder) and $\text{Dec}$ (the decoder) reconstructs the input of the encoder. Variational autoencoders in \cite{kingma2013auto} use a generative model; $p(\x,\z) = p(\x | \z) p(\z)$, where $\x$ is the observed data (images), and $\z$ are latent variables. The encoder estimates the parameters of the posterior, $\text{Enc}(\x) = p(\z | \x)$, and the decoder estimates the conditional likelihood, $\text{Dec}(\z) = p(\x|\z)$.
%Autoencoders are very flexible tools for a variety of computer vision tasks. Denoising autoencoders \cite{vincent2008extracting}, \cite{bengio2013generalized} are trained to restore corrupted input images to their denoised form. 
%This approach can be applied to many different image restoration tasks. \cite{mao2016image} train autoencoders for denoising, super-resolution, JPEG deblocking, non-blind deblurring and image inpainting.
In \cite{hinton2011transforming} autoencoders are trained with transformed image input pairs. The relative transformation parameters are also fed to the network. Because the internal representation explicitly represents the objects presence and location, the network can learn their absolute position.
One important aspect of the autoencoders is that they encourage latent representations to keep as much information about the input as possible.

\noindent\textbf{GAN.} Generative Adversarial Nets \cite{goodfellow2014generative} learn to sample realistic images with two competing neural networks. The generator $\text{Dec}$ creates images $\x = \text{Dec}(\z)$ from a random noise sample $\z$ and tries to fool a discriminator $\text{Dsc}$, which has to decide whether the image is sampled from the generator $p_g$ or from real images $p_{real}$. After a successful training the discriminator cannot distinguish the real from the generated samples. Adversarial training is often used to enforce constraints on random variables. BIGAN, \cite{donahue2016adversarial} learns a feature representation with adversarial nets by training an encoder $\text{Enc}$, such that $\text{Enc}(\x)$ is Gaussian, when $\x \sim p_{real}$. CoGAN, \cite{liu2016coupled} learns the joint distribution of multi-domain images by having generators and discriminators in each domain, and sharing their weights. They can transform images between domains without being given correspondences.
InfoGan, \cite{chen2016infogan} learns a subset of factors of variation by reproducing parts of the input vector with the discriminator.

\noindent\textbf{Disentangling and independence.} Many recent methods use neural networks for disentangling features, with various degrees of supervision. In \cite{peng2017invariantface} multi-task learning is used with full supervision for pose invariant face recognition. Using both identity and pose labels \cite{Tran_2017_CVPR} can learn pose invariant features and synthesize frontalized faces from any pose.
In \cite{yang2015weakly} autoencoders are used to generate novel viewpoints of objects. They disentangle the object category factor from the viewpoint factor by using as explicit supervision signals: the relative viewpoint transformations between image pairs. In \cite{cheung2014discovering} the output of the encoder is split in two parts: one represents the class label and the other represents the nuisance factors. Their objective function has a penalty term for misclassification and a cross-covariance cost to disentangle class from nuisance factors. Hierarchical Boltzmann Machines are used in \cite{reed2014learning} for disentangling. A subset of hidden units are trained to be sensitive to a specific factor of variation, while being invariant to others. Variational Fair Autoencoders \cite{louizos2015variational} learn a representation that is invariant to specific nuisance factors, while retaining as much information as possible. Autoencoders can also be used for visual analogy \cite{reed2015deep}.
GAN is used for disentangling intrinsic image factors (albedo and normal map) in \cite{Shu_2017_CVPR} without using ground truth labelling. They achieve this by explicitly modeling the physics of the image formation in their network.

The work most related to ours is \cite{mathieu2016disentangling}, where an autoencoder restores an image from another by swapping parts of the internal image representation. Their main improvement over \cite{reed2015deep} is the use of adversarial training, which allows for learning with image pairs instead of image triplets. Therefore, expensive labels like viewpoint alignment between different car types are no longer needed.
One of the differences between this method and ours is that it trains a discriminator for each  of the given labels. A benefit of this approach is the higher selectivity of the discriminator, but a drawback is that the number of model parameters grows linearly with the number of labels. In contrast, we work with image pairs and use a single discriminator so that our method is uninfluenced by the number of labels.
Moreover, we show formally and experimentally the difficulties of disentangling factors of variation.

\section{Disentangling factors of variation}

We are interested in the design and training of two models. One should map a data sample (\emph{e.g.}, an image) to a feature that is explicitly partitioned into subvectors, each associated to a specific factor of variation. The other model should map this feature back to an image. We call the first model the \emph{encoder} and the second model the \emph{decoder}. % (see Fig.~\ref{fig:transfer}).
For example, given the image of a car we would like the encoder to yield a feature with two subvectors: one related to the car viewpoint, and the other related to every other car attribute. 
The subvectors of the feature obtained from the encoder should be useful for classification or regression of the corresponding factor that they depend on (the car viewpoint in the example).
This subvector separation would also be very useful to the decoder. In fact, given a valid feature, one could vary only one of its subvectors (for example, the one corresponding to the viewpoint) and observe a variation of the output of the decoder just about its associated factor (the viewpoint). 
Such decoder would enable advanced editing of images. For example, it would allow the transfer of the viewpoint or other car attributes from an image to another. % (see Fig.~\ref{fig:transfer} and the Experiments section).

The main challenge in the design of these models, when trained on weakly labeled data, is that the factors of variation are latent and introduce ambiguities in the representation. We explain later that avoiding these ambiguities is not possible without using further prior knowledge about the data. We prove this fundamental issue formally, provide an example where it manifests itself and demonstrate it experimentally. 
Interestingly, as the experiments will show, whether the ambiguity emerges or not depends on the complexity of the data.
Next, we introduce our model of the data and formal definitions of our encoder and decoder.

% \vspace{.2cm}\\
\noindent\textbf{Data model. }
We assume that our observed data $\x$ is generated through some deterministic invertible and smooth process $f$ that depends on the factors $\v\sim p_\v$ and $\c\sim p_\c$, so that $\x = f(\v,\c)$. In our earlier example, $\x$ is an image, $\v$ is a viewpoint (the varying component), $\c$ is a car type (the common component), and $f$ is the rendering engine. We assume that $f$ is unknown, and $\v$ and $\c$ are independent.

% \vspace{.2cm}\\
\noindent\textbf{The encoder. }
Let $\text{Enc}$ be the encoder mapping images to features. For simplicity, we consider features split into only two column subvectors, $N_\v$ and $N_\c$, one associated to the varying factor $\v$ and the other associated to the common factor $\c$. Then, we have that $\text{Enc}(\x) = [N_\v^\top(\x)~N_\c^\top(\x)]^\top$.
Ideally, we would like to find the inverse of the image formation function, $[N_\v, N_\c] = f^{-1}$, which
separates and recovers the factors $\v$ and $\c$ from data samples $\x$, \emph{i.e.},
\begin{align}
N_\v(f(\v,\c)) = \v \quad\quad N_\c(f(\v,\c)) = \c.
\end{align}
In practice, this is not possible because any bijective transformation of $\v$ and $\c$ could be undone by $f$ and produce the same output $\x$. Therefore, we aim for $N_\v$ and $N_\c$ that satisfy the following \emph{feature disentangling} properties
\begin{align}
R_\v(N_\v(f(\v,\c)))= \v \quad\quad R_\c(N_\c(f(\v,\c)))= \c
\label{eq:featinvariance}
\end{align}
for all $\v$, $\c$, and for some bijective functions $R_\v$ and $R_\c$, so that $N_\v$ is invariant to $\c$ and $N_\c$ is invariant to $\v$.

% \vspace{.2cm}\\
\noindent\textbf{The decoder.   }
Let $\text{Dec}$ be the decoder mapping features to images. 
A first constraint is that the sequence encoder-decoder forms an \emph{autoencoder}, that is,
\begin{align}
\text{Dec}(N_\v(\x),N_\c(\x)) = \x, \quad\quad \forall \x.
\end{align}
To use the decoder for image synthesis, so that each input subvector affects only one factor in the rendered image, the ideal decoder should satisfy the \emph{data disentangling} property
\begin{align}
\text{Dec}(N_\v(f(\v_1,\c_1)),N_\c(f(\v_2,\c_2))) = f(\v_1,\c_2)
\label{eq:datainvariance}
\end{align}
for any $\v_1$, $\v_2$, $\c_1$, and $\c_2$. The equation above describes the transfer of the varying factor $\v_1$ of $\x_1$ and the common factor $\c_2$ of $\x_2$ to a new image $\x_3 = f(\v_1,\c_2)$.
\\~\\
In the next section, we show that there is an inherent ambiguity in recovering $\v$ from images and in transferring it from one image to another. We show that, given our weakly labeled data, it may not be possible to satisfy all the feature and data disentangling properties. We call this challenge the \emph{reference ambiguity}.

\subsection{The reference ambiguity}

Let us consider the ideal case where we observe the space of all images. % and are given the explicit forms of the probability density function $p_\v$.
When weak labels are made available to us, we also know what images $\x_1$ and $\x_2$ share the same $\c$ factor (for example, which images have the same car). This labeling is equivalent to defining the probability density function $p_\c$ and the joint conditional $p_{\x_1,\x_2|\c}$, where
\begin{align}
p_{\x_1,\x_2|\c}(\x_1,\x_2|\c) = \int \delta(\x_1 - f(\v_1,\c))\delta(\x_2 - f(\v_2, \c)) p(\v_1) p(\v_2)d\v_1d\v_2.
\end{align}
Firstly, we show that the labeling allows us to satisfy the feature disentangling property for $\c$ ~\eqref{eq:featinvariance}. For any $[\x_1,\x_2]\sim p_{\x_1,\x_2|\c}$ we impose $N_\c(\x_1)=N_\c(\x_2)$.
In particular, this equation is true for pairs when one of the two images is held fixed. Thus, $N_\c(\x_1) = C(\c)$, where the function $C$ only depends on $\c$, as $N_\c$ is invariant to $\v$.
% Moreover, we impose that $N_\c(\x)\sim p_\c$.
Lastly, images with the same varying factor, but different common factor must also result in different features, $C(\c_1)  = N_\v(f(\v, \c_1)) \ne N_\v(\v, \c_2) = C(\c_2)$, otherwise the autoencoder constraint cannot be satisfied. Then, there exists a bijective function $R_\c = C^{-1}$ such that property~\eqref{eq:featinvariance} is satisfied for $\c$.
Unfortunately, this is not true in general for the other disentangling properties. % We state this in the following proposition.

% Lastly, the constant associated to images with the same varying component $\v$, but different common component $\c$, must also be different, otherwise the autoencoder constraint cannot be satisfied. Then, there exists a bijective function $R_\c$ such that property~\eqref{eq:featinvariance} is satisfied. Unfortunately, this is not true in general for the other disentangling properties. We state this in the following proposition.

\begin{definition}
A function $g$ reproduces the data distribution, when it generates samples $\y_1 = g(\v_1,\c)$ and $\y_2 = g(\v_2,\c)$ that have the same distribution as the data. Formally, $[\y_1, \y_2] \sim p_{\x_1, \x_2}$, where the latent factors are independent, $\v_1 \sim p_\v$, $\v_2 \sim p_\v$ and $\c \sim p_\c$.
\end{definition}
The reference ambiguity occurs, when a decoder reproduces the data without satisfying the disentangling properties.

\begin{proposition}
Let $p_\v$ assign the same probability value to at least two different instances of $\v$. Then, we can find encoders that reproduce the data distribution, but do not satisfy the disentangling properties for $\v$ in ~\eqref{eq:featinvariance} and \eqref{eq:datainvariance}.
\end{proposition}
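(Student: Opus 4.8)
The plan is to exhibit an explicit alternative encoder, together with a matching decoder, that \emph{relabels} the varying factor in a way that depends on the common factor $\c$, yet leaves every observable distribution unchanged. The guiding intuition behind the reference ambiguity is that $N_\v$ can only encode $\v$ relative to some reference frame, and nothing in the weak labels pins that frame down consistently across different values of $\c$. The hypothesis that $p_\v$ takes the same value at two distinct instances is precisely what lets us shuffle that frame around in a $\c$-dependent manner without any statistical trace.

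First I would convert the hypothesis into a nontrivial measure-preserving bijection of the $\v$-space. Let $\v_a \neq \v_b$ with $p_\v(\v_a) = p_\v(\v_b)$. In the discrete case I take the involution $T$ that swaps $\v_a \leftrightarrow \v_b$ and fixes every other value; since the two swapped states carry equal probability, $T$ pushes $p_\v$ forward to itself and $T \neq \mathrm{id}$. In the continuous case I instead choose small neighborhoods $U_a \ni \v_a$ and $U_b \ni \v_b$ on which the densities match after a change of variables, let $T$ be a density-preserving diffeomorphism exchanging $U_a$ and $U_b$, and set $T = \mathrm{id}$ elsewhere; again $p_\v(T^{-1}(\cdot)) = p_\v(\cdot)$ and $T \neq \mathrm{id}$.

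Next I would make the relabeling depend on $\c$. Pick a measurable set $S$ with $0 < p_\c(S) < 1$ and define $T_\c = T$ for $\c \in S$ and $T_\c = \mathrm{id}$ otherwise, so that $\c \mapsto T_\c$ is genuinely nonconstant. Using invertibility of $f$, I define the alternative encoder by $N_\c(f(\v,\c)) = \c$ (which already satisfies the $\c$-disentangling property, as shown above) and $N_\v(f(\v,\c)) = T_\c(\v)$, with matching decoder $\text{Dec}(u,w) = f(T_w^{-1}(u), w)$ reconstructing the input; the induced generator is $g(\v,\c) = f(T_\c(\v),\c)$. To verify that $g$ reproduces the data distribution, I fix $\c$ and draw $\v_1,\v_2 \sim p_\v$ independently: since $T_\c$ is measure preserving, $(T_\c(\v_1), T_\c(\v_2))$ is again a pair of independent $p_\v$-samples, so the conditional law of $[\y_1,\y_2] = [f(T_\c(\v_1),\c), f(T_\c(\v_2),\c)]$ given $\c$ equals that of $[f(\v_1,\c), f(\v_2,\c)]$, and averaging over $\c \sim p_\c$ yields $[\y_1,\y_2] \sim p_{\x_1,\x_2}$.

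Finally I would show the disentangling properties fail. Feature disentangling would require a single bijection $R_\v$ with $R_\v(T_\c(\v)) = \v$ for all $\v$ and $\c$; taking $\c \notin S$ forces $R_\v = \mathrm{id}$, and then taking $\c \in S$ forces $T(\v) = \v$, contradicting $T \neq \mathrm{id}$. The same obstruction breaks the data disentangling identity $\text{Dec}(N_\v(f(\v_1,\c_1)), N_\c(f(\v_2,\c_2))) = f(\v_1,\c_2)$ whenever $T_{\c_1} \neq T_{\c_2}$. The step I expect to be the main obstacle is the rigorous construction of $T$ in the continuous setting: upgrading the pointwise equality $p_\v(\v_a) = p_\v(\v_b)$ to an honest measure-preserving diffeomorphism needs a careful local change-of-variables on matched neighborhoods, and one must ensure the $\c$-dependence is truly nontrivial, which implicitly requires $p_\c$ to be nondegenerate. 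Once $T_\c$ is in hand, the distribution matching and the failure of disentangling are essentially bookkeeping.
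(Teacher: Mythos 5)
Your proposal follows essentially the same route as the paper: both construct a $\c$-dependent, $p_\v$-preserving relabeling $T$ of the varying factor, set $N_\v(f(\v,\c)) = T_\c(\v)$ with the matching decoder, verify that the data distribution is reproduced, and exhibit a pair $\c_1\in{\cal C}$, $\c_2\notin{\cal C}$ on which the feature and data disentangling properties fail. If anything, your choice of $T$ as a measure-preserving involution swapping $\v_a \leftrightarrow \v_b$ is cleaner than the paper's map, which sends both $\v_1$ and $\v_2$ to the \emph{same} point for each fixed $\c$ and is therefore non-injective in $\v$ (so the decoder defined through the autoencoder constraint is only well-defined up to a null set, and the pushforward of $p_\v$ is only preserved in the continuous case); your bijective $T_\c$ avoids both issues and, as you note, makes explicit the implicit requirement that $p_\c$ be nondegenerate.
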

\begin{proof}
We already saw that $N_\c$ satisfies ~\eqref{eq:featinvariance}, so we can choose $N_\c = f_\c^{-1}$, the ideal encoding.
Now we look at defining $N_\v$ and the decoder.
The iso-probability property of $p_\v$ implies that there exists a mapping $T(\v,\c)$, such that $T(\v,\c)\sim p_\v$ and $T(\v,\c_1)\neq T(\v,\c_2)$ for some $\v$ and $\c_1\neq \c_2$. For example, let us denote with $\v_1\neq \v_2$ two varying components such that $p_\v(\v_1) = p_\v(\v_2)$. Then, let 
\begin{align}
T(\v,\c) \doteq \begin{cases}
\v   & \text{if } \v \neq \v_1,\v_2\\
\v_1 & \text{if } \v = \v_1 \lor \v_2 \text{ and } \c \in {\cal C}  \\ 
\v_2 & \text{if } \v = \v_1 \vee \v_2 \text{ and } \c \notin {\cal C} 
\end{cases}
\end{align}
and ${\cal C}$ is a subset of the domain of $\c$, where $\int_{\cal C}p_\c(\c)d\c=1/2$.
Now, let us define the encoder as $N_\v(f(\v,\c)) = T(\v,\c)$. By using the autoencoder constraint, the decoder satisfies
\begin{align}
\text{Dec}(N_\v(f(\v,\c)),N_\c(f(\v,\c))) = \text{Dec}(T(\v,\c),\c) = f(\v,\c).
\end{align}
Because $T(\v,\c)\sim p_\v$ and $\c \sim p_\c$ by construction, and $T(\v,\c)$ and $\c$ are independent, our encoder-decoder pair defines a data distribution identical to that given as training set
\begin{align}
[\text{Dec}(T(\v_1,\c),\c),\text{Dec}(T(\v_2,\c),\c) ]\sim p_{\x_1,\x_2}.
\end{align}
The feature disentanglement property is not satisfied because $N_\v(f(\v_1, \c_1)) = T(\v_1,\c_1) \neq  T(\v_1,\c_2) = N_\v(f(\v_1, \c_2))$, when $\c_1 \in {\cal C}$ and $\c_2 \not\in {\cal C}$.
Similarly, the data disentanglement property does not hold, because $\text{Dec}(T(\v_1,\c_1), \c_1) \neq \text{Dec}(T(\v_1,\c_2), \c_2 )$.
\end{proof}

The above proposition implies that we cannot learn to disentangle all the factors of variation from weakly labeled data, even if we had access to all the data and knew the distributions $p_\v$ and $p_\c$.

To better understand it, let us consider a practical example. Let $\v\sim {\cal U}[ -\pi, \pi ]$ be the (continuous) viewpoint (the azimuth angle) and $\c \sim {\cal B}(0.5)$ the car type, where $\cal U$ denotes the uniform distribution and ${\cal B}(0.5)$ the Bernoulli distribution with probability $p_\c(\c=0)=p_\c(\c=1)=0.5$ (\emph{i.e.}, there are only $2$ car types). In this case, every instance of $\v$ is iso-probable in $p_\v$ so we have the worst scenario for the reference ambiguity. We can define the function $T(\v,\c) = \v (2 \c -1)$ so that the mapping of $\v$ is mirrored as we change the car type. By definition $T(\v,\c)\sim {\cal U}[ -\pi, \pi ]$ for any $\c$ and $T(\v,\c_1)\neq T(\v,\c_2)$ for $\v\neq 0$ and $\c_1\neq \c_2$. So we cannot tell the difference between $T$ and the ideal correct mapping to the viewpoint factor.
This is equivalent to an encoder $N_\v(f(\v,\c)) = T(\v,\c)$ that reverses the ordering of the azimuth of car $1$ with respect to car $0$. Each car has its own reference system, and thus it is not possible to transfer the viewpoint from one system to the other.

We now introduce a training procedure to build the encoder and decoder from weakly labeled data. 
We will use these models to illustrate several challenges: 1) the reference ambiguity, 2) the choice of the feature dimensionality and 3) the normalization layers (see the Implementation section).

\subsection{Model training}
In our training procedure we use two terms in the objective function: an \emph{autoencoder loss} and an \emph{adversarial loss}. We describe these losses in functional form, however the components are implemented using neural networks. In all our terms we use the following sampling of independent factors
\begin{align}
\c_1,\c_3 \sim p_\c, \quad \v_1,\v_2,\v_3 \sim p_\v.
\end{align}
The images are formed as $\x_1 = f(\v_1,\c_1)$, $\x_2 = f(\v_2,\c_1)$ and $\x_3 = f(\v_3,\c_3)$. The images $\x_1$ and $\x_2$ share the same common factor, and $\x_1$ and $\x_3$ are independent. In our objective functions, we use either pairs or triplets of the above images. We denote the inverse of the rendering engine as $f^{-1}=[ f_\v^{-1}, f_\c^{-1}]$, where the subscript refers to the recovered factor.

% \vspace{.2cm}\\
\noindent\textbf{Autoencoder loss. } In this term, we use images $\x_1$ and $\x_2$ with the same common factor $\c_1$. We feed both images to the encoder. Since both images share the same $\c_1$, we impose that the decoder should reconstruct $\x_1$ from the encoder subvector $N_\v(\x_1)$ and the encoder subvector $N_\c(\x_2)$, and similarly for the reconstruction of $\x_2$. The autoencoder objective is thus defined as
\begin{align}
{\cal L}_{AE} \doteq E_{\x_1,\x_2}\Big[& \big|\x_1 - \text{Dec}(N_\v(\x_1), N_\c(\x_2)) \big|^2
+ \big|\x_2 - \text{Dec}(N_\v(\x_2), N_\c(\x_1)) \big|^2  \Big].
\end{align}

% \vspace{.2cm}\\
\noindent\textbf{Adversarial loss. } 
We introduce an adversarial training where the \emph{generator} is our encoder-decoder pair and the \emph{discriminator} Dsc is a neural network, which takes image pairs as input. The discriminator learns to distinguish between real image pairs $[\x_1, \x_2]$ and fake ones $[ \x_1, \x_{3\oplus 1}]$, where $\x_{3\oplus 1} \doteq \text{Dec}(N_\v(\x_3), N_\c(\x_1) )]$. If the encoder were ideal, the image $\x_{3\oplus 1}$ would be the result of taking the common component from $\x_1$ and the viewpoint component from $\x_3$. The generator learns to fool the discriminator, so that $\x_{3\oplus 1}$ looks like the random variable $\x_2$ (the common component is $\c_1$ and the varying component is independent of $\v_1$). To this purpose, the decoder must make use of $N_\c(\x_1)$, since $\x_3$ does not carry any information about $\c_1$.
The objective function is thus defined as
\begin{align}
{\cal L}_{GAN} \doteq
E_{\x_1,\x_2} \Big[ \log(\text{Dsc}(\x_1,\x_2)) \Big] +
E_{\x_1,\x_3} \Big[ \log(1-\text{Dsc}( \x_1, \x_{3\oplus 1} ) )  \Big].
\end{align}

% \vspace{.2cm}\\
\noindent\textbf{Composite loss. } 
Finally, we optimize the weighted sum of the two losses ${\cal L} = {\cal L}_{AE} + \lambda {\cal L}_{GAN}$, 
\begin{align}
\min_{\text{Dec},\text{Enc}} \max_{\text{Dsc}} {\cal L}_{AE}(\text{Dec},\text{Enc}) +
\lambda {\cal L}_{GAN}(\text{Dec},\text{Enc},\text{Dsc})\nonumber
\end{align}
where $\lambda$ regulates the relative importance of the two losses.

\noindent\textbf{Shortcut problem. }
Ideally, at the global minimum of ${\cal L}_{AE}$, $N_\v$ relates only to the factor $\v$ and $N_\c$ only to $\c$. However, the encoder may map a complete description of its input into $N_\v$ and the decoder may completely ignore $N_\c$. We call this challenge the \emph{shortcut problem}. When the shortcut problem occurs, the decoder is invariant to its second output, so it does not transfer the $\c$ factor correctly,
\begin{eqnarray}
\text{Dec}(N_\v(\x_3), N_\c(\x_1)) = \x_3.
\end{eqnarray}
The shortcut problem can be addressed by reducing the dimensionality of $N_\v$, so that it cannot build a complete representation of all input images. This also forces the encoder to make use of $N_\c$ for the common factor. However, this strategy may not be convenient as it leads to a time consuming trial-and-error procedure to find the correct dimensionality. A better way to address the shortcut problem is to use adversarial training. For our analysis we assume that the discriminator is perfect and the global optimum of the adversarial training has been reached. Thus, the real and fake image pair distributions are identical, and any statistics of the two distributions should also match. We compute statistics of the inverse of the common component $f_\c^{-1}$. For the images $\x_1$ and $\x_2$ we obtain
\begin{align}
E_{\x_1,\x_2} \Big[ |f_\c^{-1}(\x_1) - f_\c^{-1}(\x_2) |^2 \Big] = E_{\c_1} \Big[ |\c_1 - \c_1|^2\Big] = 0
\end{align}
by construction (of $\x_1$ and $\x_2$).
For the images $\x_1$ and $\x_{3\oplus1}$ we obtain
\begin{align}
E_{\x_1 , \x_3 } \Big[ |f_\c^{-1}(\x_1) - f_\c^{-1}(\x_{3\oplus1}) |^2 \Big] =
E_{\v_1, \c_1 , \v_3 , \c_3 } \Big[ | \c_1 - \c_{3\oplus1} |^2\Big] \ge 0,
\end{align}
where $\c_{3\oplus1} = f_\c^{-1}(\x_{3\oplus1})$. We achieve equality if and only if $\c_1 = \c_{3\oplus1}$ everywhere. This means that the decoder must use $N_\c$ to recover the common component $\c$ of its input and $N_\c$ is sufficient to recover it.

\subsection{Implementation}

In our implementation we use convolutional neural networks for all the models. We denote with $\theta$ the parameters associated to each network. Then, the optimization of the composite loss can be written as
\begin{align}
\hat\theta_\text{Dec},\hat\theta_\text{Enc},\hat\theta_\text{Dsc} = \arg \min_{\theta_\text{Dec},\theta_\text{Enc}} \max_{\theta_\text{Dsc}} {\cal L}(\theta_\text{Dec},\theta_\text{Enc},\theta_\text{Dsc}).
\end{align}
We choose $\lambda = 1$ and also add regularization to the adversarial loss so that each logarithm has a minimum value. We define $\log_\epsilon  \text{Dsc}(\x_1,\x_2) = \log ( \epsilon + \text{Dsc}(\x_1,\x_2) )$ (and similarly for the other logarithmic term) and use $\epsilon = 10^{-12}$.
The main components of our neural network are shown in Fig.~\ref{fig:network}. The architecture of the encoder and the decoder were taken from DCGAN \cite{radford2015dcGAN}, with slight modifications. We added fully connected layers at the output of the encoder and to the input of the decoder. For the discriminator we used a simplified version of the VGG \cite{simonyan2014very} network. As the input to the discriminator is an image pair, we concatenate them along the color channels.

% \begin{wrapfigure}{r}{5cm}
\begin{figure}[h]
	\centering
	\includegraphics[width=0.5\linewidth]{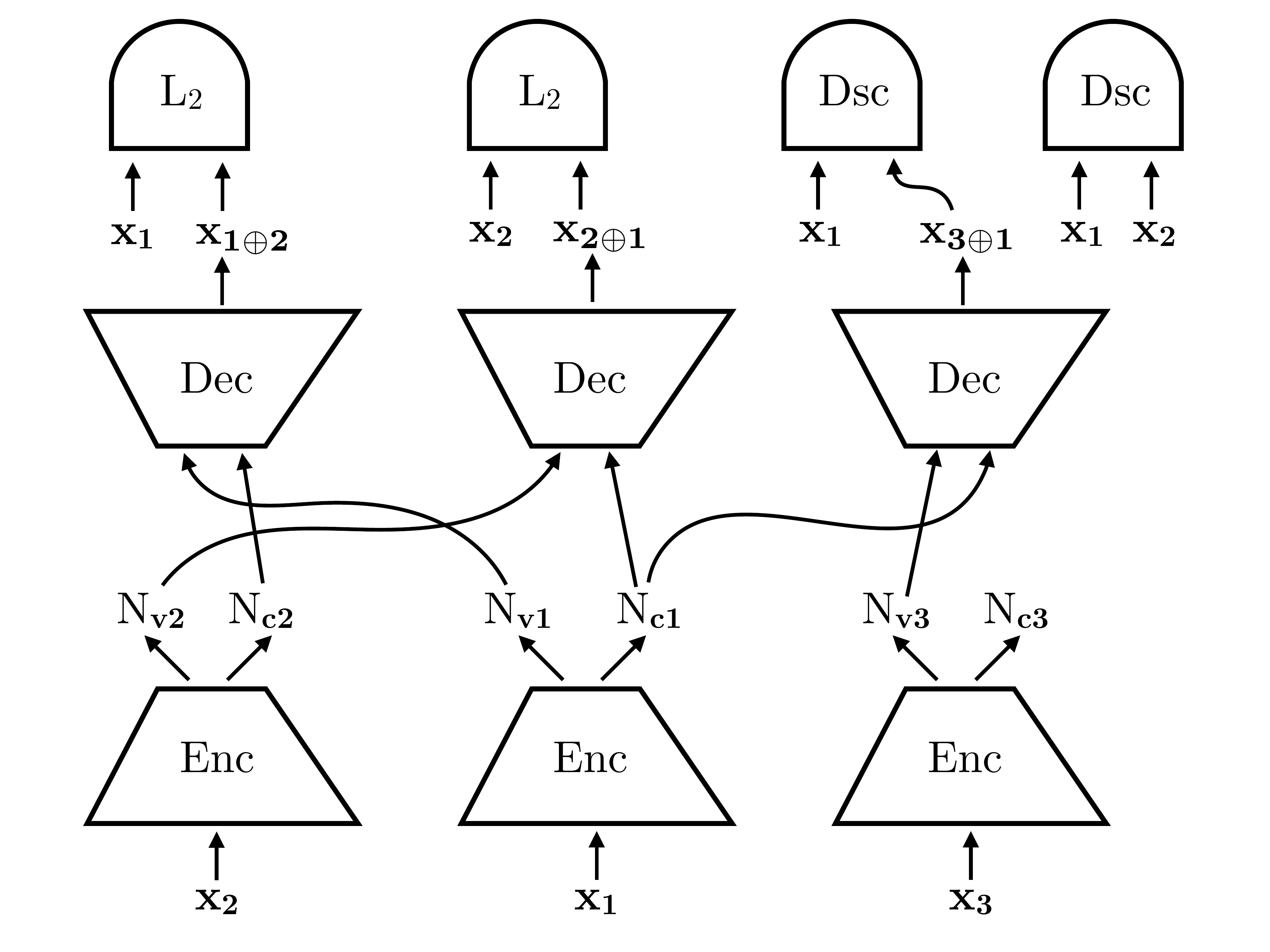}
	\caption{Learning to disentangle factors of variation. The scheme above shows how the encoder (Enc), the decoder (Dec) and the discriminator (Dsc) are trained with input triplets. The components with the same name share weights.}
	\label{fig:network}
\end{figure}
% \end{wrapfigure}

% \vspace{.2cm}~\\
\noindent\textbf{Normalization. }
In our architecture both the encoder and the decoder networks use blocks with a convolutional layer, a nonlinear activation function (ReLU/leaky ReLU) and a normalization layer, typically, batch normalization (BN).
As an alternative to BN we consider the recently introduced \emph{instance normalization} (IN) \cite{UlyanovVL17}. The main difference between BN and IN is that the latter just computes the mean and standard deviation across the spatial domain of the input and not along the batch dimension. Thus, the shift and scaling for the output of each layer is the same at every iteration for the same input image. In practice, we find that IN improves the performance.

\section{Experiments}

We tested our method on the MNIST, Sprites and ShapeNet datasets. We performed qualitative experiments on attribute transfer, and quantitative tests on the nearest neighbor classification task. We show results with models using only the autoencoder loss (\textbf{AE}) and the composite loss (\textbf{AE+GAN}).

% \vspace{0.2cm}\\
\noindent\textbf{MNIST.  }
The MNIST dataset \cite{MNIST} contains handwritten grayscale digits of size $28 \times 28$ pixel. There are $60$K images of $10$ classes for training and $10$K for testing. The common factor is the digit class and the varying factor is the intraclass variation. We take image pairs that have the same digit for training, and use our full model \textbf{AE+GAN} with dimensions $64$ for $N_\v$ and $64$ for $N_\c$.
In Fig. ~\ref{fig:mnistsprites} (a) and (b) we show the transfer of varying factors. Qualitatively, both our method and \cite{mathieu2016disentangling} perform well. We observe neither the reference ambiguity nor the shortcut problem in this case.

\begin{figure*}[h]
	\centering
	\begin{subfigure}[b]{.4\textwidth}
		\includegraphics[width=1\linewidth]{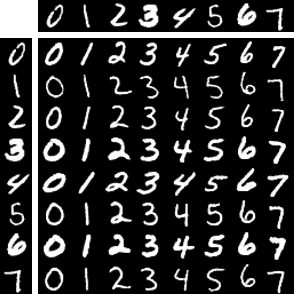}
		\caption{}
	\end{subfigure}
	\hspace{0.07cm}
	\begin{subfigure}[b]{.4\textwidth}
		\includegraphics[width=1\linewidth]{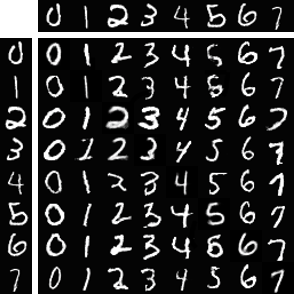}
		\caption{}
	\end{subfigure}
	\hspace{0.07cm}
	\begin{subfigure}[b]{.4\textwidth}
		\includegraphics[width=1\linewidth]{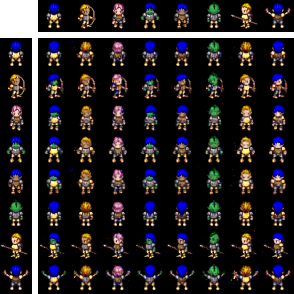}
		\caption{}
	\end{subfigure}
	\hspace{0.07cm}
	\begin{subfigure}[b]{.4\textwidth}
		\includegraphics[width=1\linewidth]{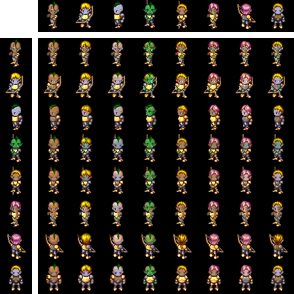}
		\caption{}
	\end{subfigure}
	\caption{Renderings of transferred features. In all figures the variable factor is transferred from the left column and the common factor from the top row. {(a)} MNIST \protect\cite{mathieu2016disentangling}; {(b)} MNIST (ours); {(c)} Sprites \protect\cite{mathieu2016disentangling}; {(d)} Sprites (ours).}
	\label{fig:mnistsprites}
\end{figure*}

% \vspace{.1cm}\\
\noindent\textbf{Sprites.  }
The Sprites dataset \cite{reed2015deep} contains $60 \time 60$ pixel color images of animated characters (sprites). There are $672$ sprites, $500$ for training, $100$ for testing and $72$ for validation. Each sprite has $20$ animations and $178$ images, so the full dataset has $120$K images in total. There are many changes in the appearance of the sprites, they differ in their body shape, gender, hair, armor, arm type, greaves, and weapon. We consider character identity as the common factor and the pose as the varying factor. We train our system using image pairs of the same sprite and do not exploit labels on their pose. We train the \textbf{AE+GAN} model with dimensions $64$ for $N_\v$ and $448$ for $N_\c$.
Fig. ~\ref{fig:mnistsprites} (c) and (d) show results on the attribute transfer task. Both our method and \cite{mathieu2016disentangling}'s transfer the identity of the sprites correctly.

\noindent\textbf{ShapeNet with a white background.  }
The ShapeNet dataset \cite{shapenet2015} contains 3D objects than we can render from different viewpoints.
We consider only one category (cars) for a set of fixed viewpoints. Cars have high intraclass variability and they do not have rotational symmetries. We used approximately $3$K car types for training and $300$ for testing. We rendered $24$ possible viewpoints around each object in a full circle, resulting in $80$K images in total. The elevation was fixed to $15$ degrees and azimuth angles were spaced $15$ degrees apart. We normalized the size of the objects to fit in a $100 \times 100$ pixel bounding box, and placed it in the middle of a $128 \times 128$ pixel image.
We trained both \textbf{AE} and \textbf{AE+GAN} on ShapeNet, and tried different settings for the feature dimensions $N_\v$. The size of the common feature $N_\c$ was fixed to $1024$ dimensions.
Fig.~\ref{fig:shapenet} shows the attribute transfer on the Shapenet dataset with a white background. We compare the methods \textbf{AE} and \textbf{AE+GAN} with different feature dimension of $N_\v$. We can observe that the transferring performance degrades for \textbf{AE}, when we increase the feature size of $N_\v$. As expected, the autoencoder takes the shortcut and tries to store all information into $N_\v$. The model \textbf{AE+GAN} instead renders images without loss of quality, independently of the feature dimension. Furthermore, none of the models exhibits the reference ambiguity: In all cases the viewpoint could be transferred correctly.

\begin{figure*}[h]
	\centering
	\begin{subfigure}[b]{.45\textwidth}
		\includegraphics[width=1\linewidth]{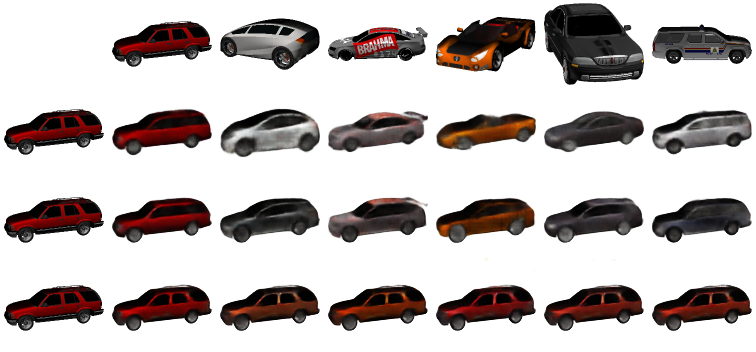}
		\caption{}
	\end{subfigure}\hspace{.05\textwidth}
	\begin{subfigure}[b]{.45\textwidth}
		\includegraphics[width=1\linewidth]{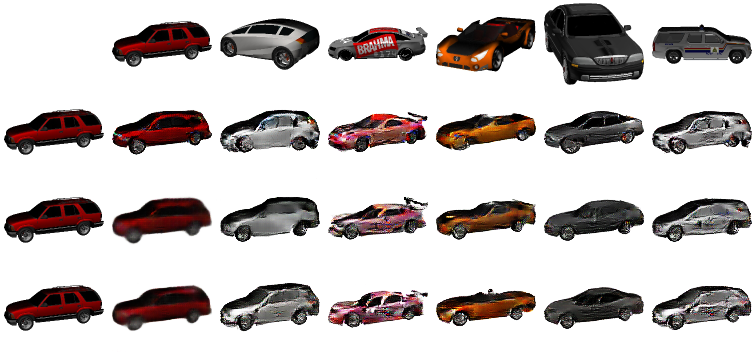}
		\caption{}
	\end{subfigure}
	\caption{Feature transfer on Shapenet. {(a)} synthesized images with \textbf{AE}, where the top row shows images from which the car type is taken. The second, third and fourth row show the decoder renderings using $2$, $16$ and $128$ dimensions for the feature $N_\v$. {(b)} images synthesized with \textbf{AE+GAN}. The setting for the inputs and feature dimensions are the same as in (a).}
	\label{fig:shapenet}
\end{figure*}

\begin{figure}[h]
	\centering
	\begin{subfigure}[b]{.25\textwidth}
		\includegraphics[width=1\linewidth]{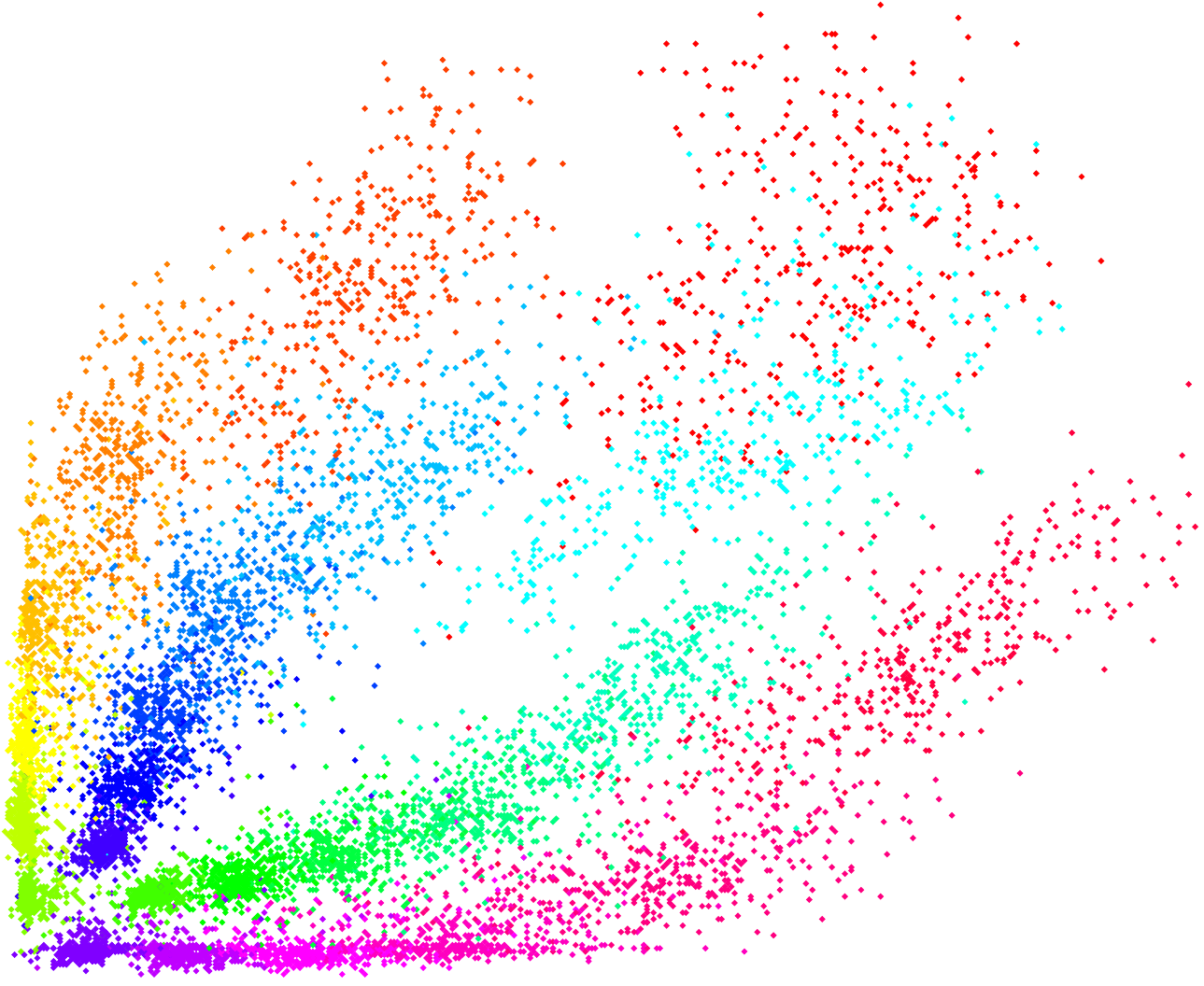}
		\caption{}
	\end{subfigure}
	\hspace{1.5cm}
	\begin{subfigure}[b]{.25\textwidth}
		\includegraphics[width=1\linewidth]{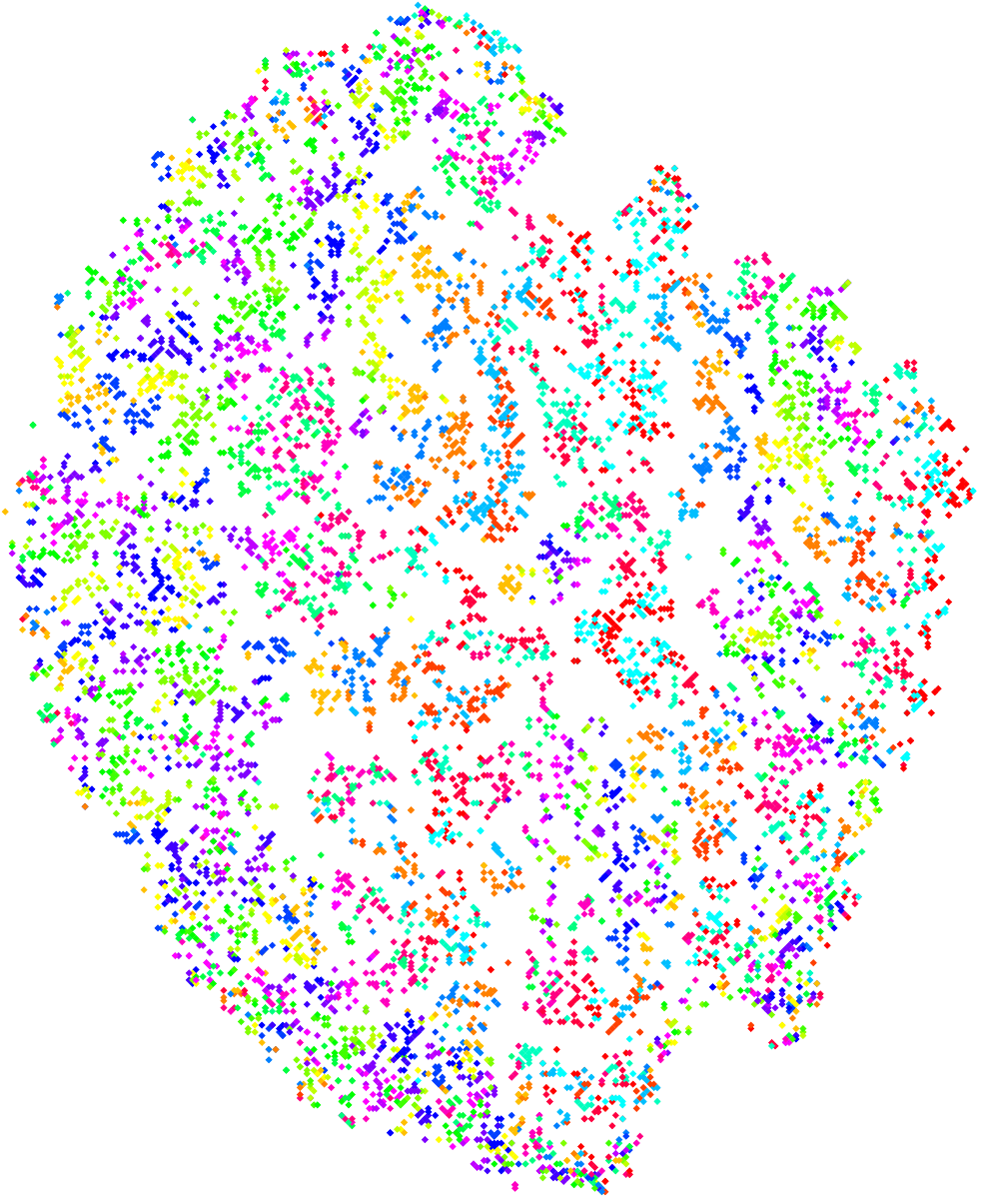}
		\caption{}
	\end{subfigure}
	\hspace{3cm}
	\begin{subfigure}[b]{.25\textwidth}
		\includegraphics[width=1\linewidth]{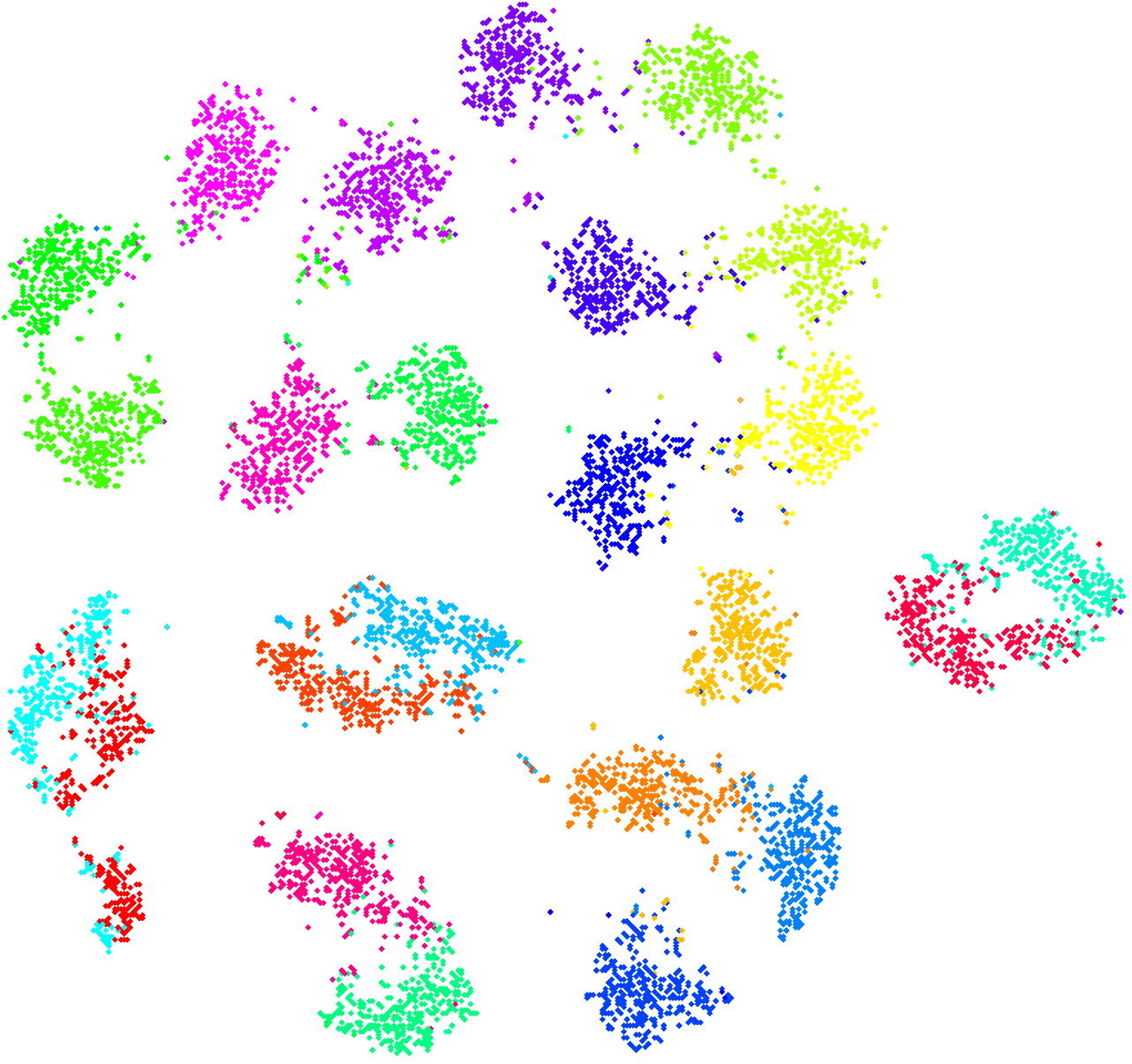}
		\caption{}
	\end{subfigure}
	\hspace{1.5cm}
	\begin{subfigure}[b]{.3\textwidth}
		\includegraphics[width=1\linewidth]{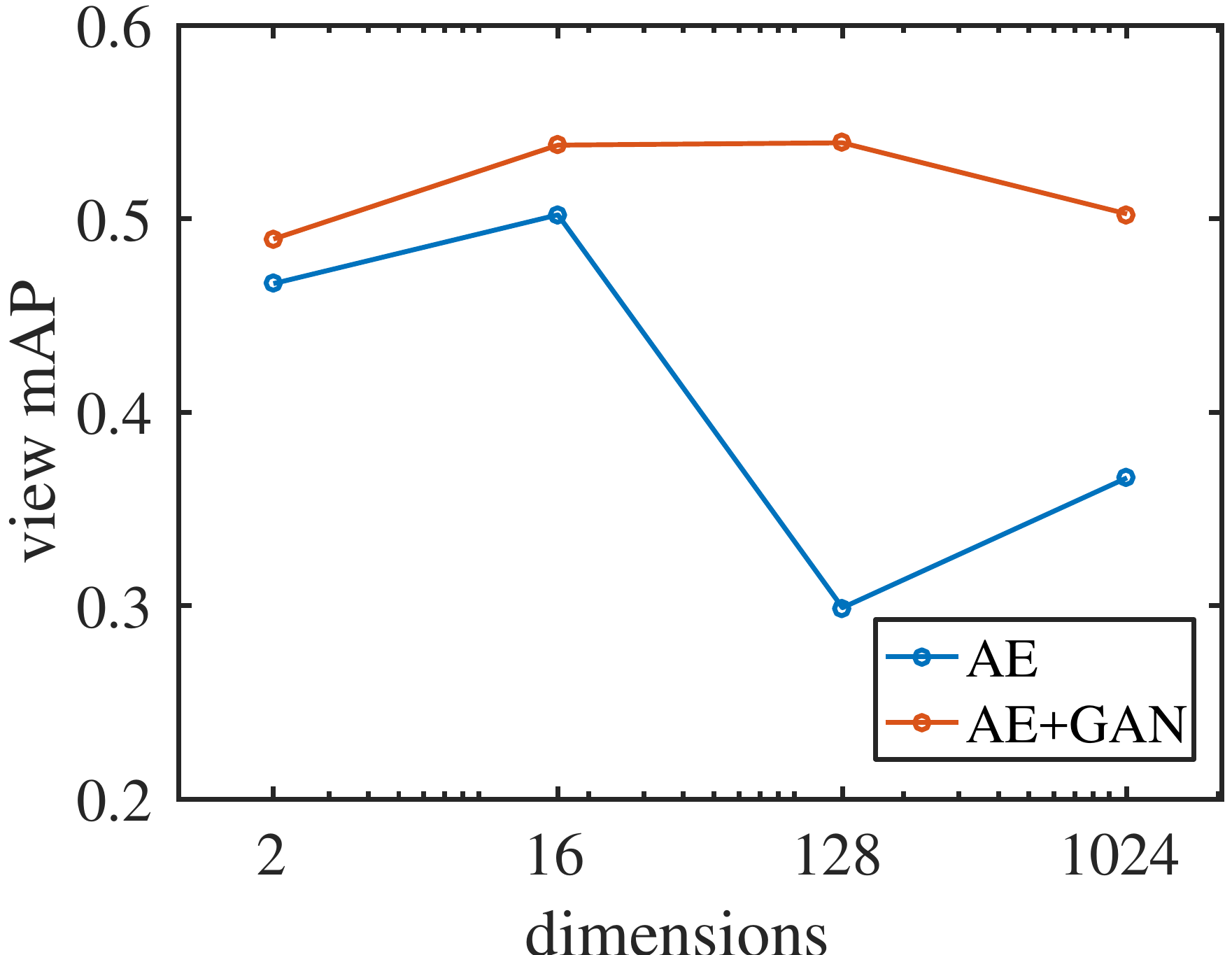}
		\caption{}
	\end{subfigure}
	\caption{The effect of dimensions and objective function on $N_v$ features. {(a)}, {(b)}, {(c)} t-SNE embeddings on $N_\v$ features. Colors correspond to the ground truth viewpoint. The objective functions and the $N_\v$ dimensions are: {(a)} \textbf{AE} $2$ dim, {(b)} \textbf{AE} $128$ dim, {(c)} \textbf{AE+GAN} $128$ dim. {(d)} Mean average precision curves for the viewpoint prediction from the viewpoint feature using different models and dimensions for $N_\v$.}
	\label{fig:visualize}
\end{figure}

In Fig.~\ref{fig:visualize} we visualize the t-SNE embeddings of the $N_\v$ features for several models using different feature sizes. For the $2D$ case, we do not modify the data. We can see that both \textbf{AE} with $2$ dimensions and \textbf{AE+GAN} with $128$ separate the viewpoints well, but \textbf{AE} with $128$ dimensions does not due to the shortcut problem.
We investigate the effect of dimensionality of the $N_\v$ features on the nearest neighbor classification task. The performance is measured by the mean average precision. For $N_\v$ we use the viewpoint as ground truth.
Fig.~\ref{fig:visualize} also shows the results on \textbf{AE} and \textbf{AE+GAN} models with different $N_\v$ feature dimensions. The dimension of $N_\c$ was fixed to $1024$ for this experiment. One can now see quantitatively that \textbf{AE} is sensitive to the size of $N_\v$, while \textbf{AE+GAN} is not. \textbf{AE+GAN} also achieves a better performance.
We used the ShapeNet with a white background dataset also to compare the different normalization choices in Table~\ref{table:BNtable}. 
We evaluate the case when batch, instance and no normalization are used and compute the performance on the nearest neighbor classification task. We fixed the feature dimensions at $1024$ for both $N_\v$ and $N_\c$ features in all normalization cases.
We can see that both batch and instance normalization perform equally well on viewpoint classification and ``no normalization'' is slightly worse. For the car type classification instance normalization is clearly better.

% \begin{wraptable}{r}{6cm}
\begin{table}
	\caption{Nearest neighbor classification on $N_\v$ and $N_\c$ features using different normalization techniques on ShapeNet with a white background.}
	\label{table:BNtable}
	\centering
	\begin{tabular}{lll}
		% \toprule
		\cmidrule{1-3}
		Normalization     & $N_\v$ mAP  & $N_\c$ mAP      \\
		\midrule
		\textbf{None} & 0.47 & 0.13      \\
		\textbf{Batch}    & 0.50  & 0.08      \\
		\textbf{Instance}    & 0.50  & 0.20      \\
		\bottomrule
	\end{tabular}
% \end{wraptable}
\end{table}

%\begin{figure}[t]
%	\centering
%	\begin{subfigure}[b]{.3\textwidth}
%		\includegraphics[width=1\linewidth]{figures/swapfeatures/viewMAP.pdf}
%	\end{subfigure}
%	\caption{Mean average precision curves for the viewpoint prediction from the viewpoint feature using different models and dimensions for $N_\v$.}
% 	\label{fig:mAPfig}
% \end{figure}

% \begin{wrapfigure}[r]{6cm}
\begin{figure}[t]
	\centering
	\begin{subfigure}[b]{.45\textwidth}
		\includegraphics[width=1\linewidth,trim={0 10cm 9cm 0},clip]{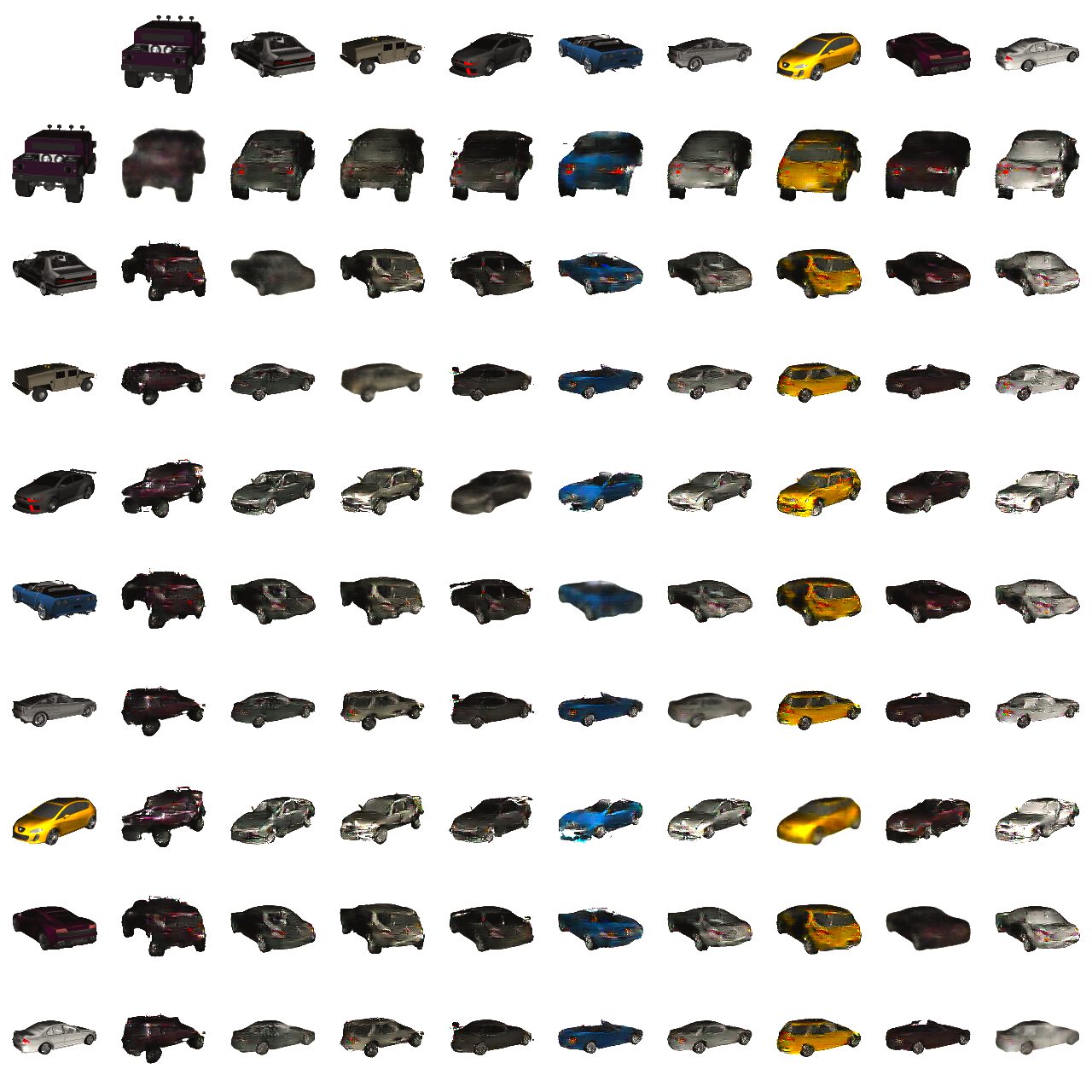}
		\caption{}
	\end{subfigure}
	\hspace{0.5cm}
	\begin{subfigure}[b]{.45\textwidth}
		\includegraphics[width=1\linewidth,trim={0 9cm 9cm 0},clip]{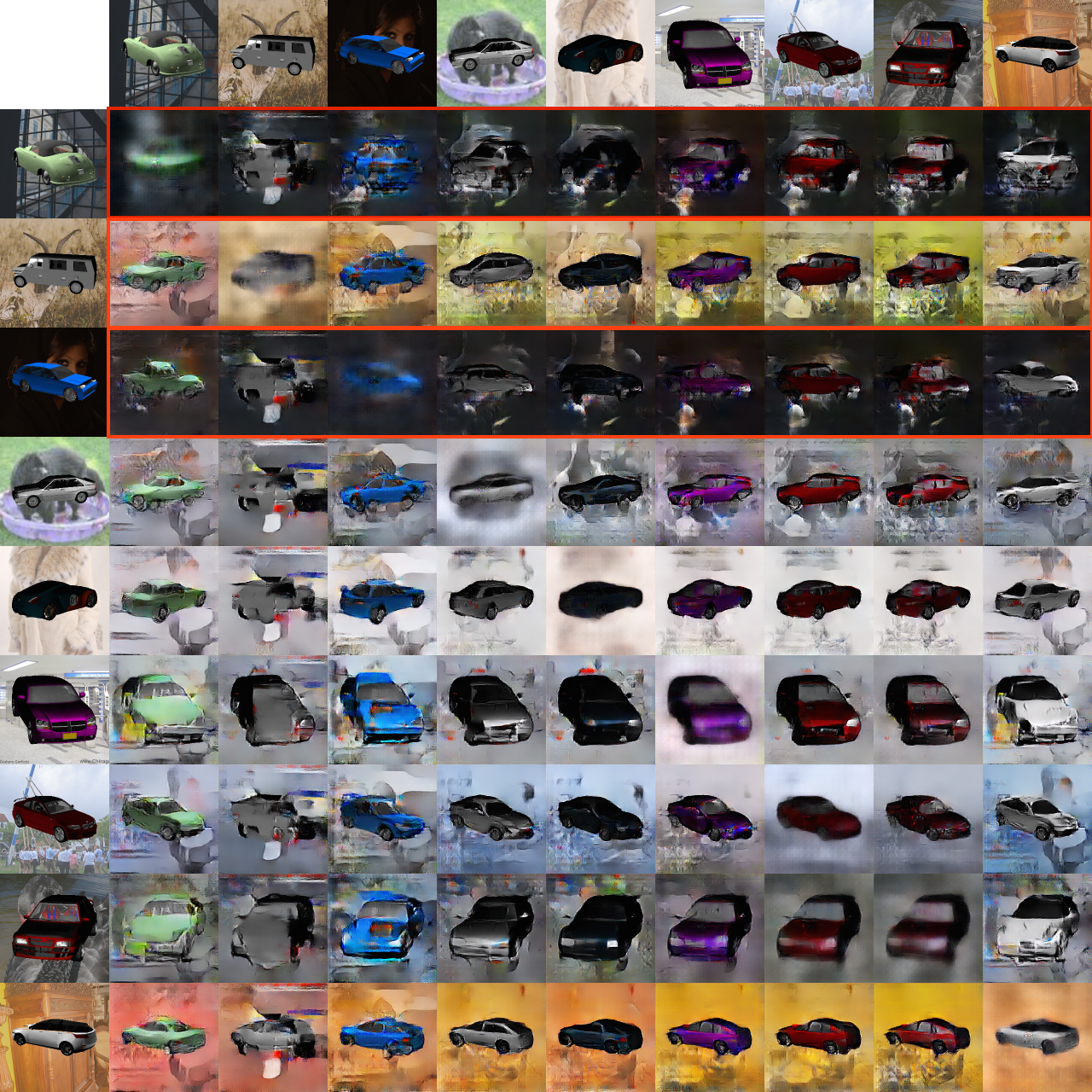}
		\caption{}
	\end{subfigure}
	\caption{ShapeNet transfers with (a) a white and (b) ImageNet background.}
	\label{fig:shapenet_background}
% \end{wrapfigure}
\end{figure}

\noindent\textbf{ShapeNet with ImageNet background. }
We render the ShapeNet dataset (same set of cars as in the previous section) with ImageNet images as background. The settings for the rendering (image size, viewpoints) are the same as in the case with a white background. We choose the backgrounds randomly for each car image, so that the overall dataset size of the data is the same, $80$K. Since the image pairs use a different background during the training, the background is also part of the varying component. In Fig.~\ref{fig:shapenet_background} we show results on attribute transfer in the case of ShapeNet with a white and with ImageNet background. We found that the reference ambiguity does not emerge in the first dataset, but it does emerge in the second dataset, possibly due to the higher complexity.
We highlight these incorrect viewpoint transfers with a red border (see top three rows in Fig.~\ref{fig:shapenet_background} (b). Nonetheless, we find that the proposed model more often than not correctly transfers the viewpoint. The background seems to transfer less well than the viewpoint, but we speculate that the background transfer might improve with better tuning and longer training.

\section{Conclusions}

In this paper we studied the challenges of disentangling factors of variation. We described the reference ambiguity and showed that it is inherently present in the task, when weak labels are used. Most importantly this ambiguity can be stated independently of the learning algorithm.
We also introduced a novel method to train models to disentangle factors of variation.
The model must be part of an autoencoder since our method requires that the representation is sufficient to reconstruct the input data.
We have shown how the shortcut problem due to feature dimensionality can be kept under control through adversarial training. We demonstrated that training and transfer of factors of variation may not be guaranteed. However, in practice we observe that our trained model works well on most datasets and exhibits good generalization capabilities.

% In this paper we have introduced a novel method to train models to disentangle factors of variation. The model must be part of an autoencoder since our method requires that the representation be sufficient to reconstruct the input data. Our labeling is flexible since we can train our system with only the indication of what single factor has changed between two samples. A quantification of the change is not needed. We have shown how issues due to feature dimensionality can be kept under control through adversarial training. We also demonstrated that training and transfer of factors of variation may not be guaranteed. However, in practice we observe that our trained model works well on most datasets and exhibits good generalization capabilities.

 \small
\bibliography{refs}
\bibliographystyle{iclr2018_conference}

\end{document}